\newtheorem{definition}{Definition}
\newtheorem{algorithm}{Algorithm}
\newtheorem{theorem}{Theorem}
\newtheorem{lemma}{Lemma}
\begin{document}

\title{Context-specific independence in graphical log-linear models}

\author{Henrik Nyman$^{1, \ast}$, Johan Pensar$^{1}$, Timo Koski$^{3}$, Jukka Corander$^{2}$ \\
$^{1}$Department of Mathematics, \AA bo Akademi University, Finland \\
$^{2}$Department of Mathematics and statistics, University of Helsinki, Finland \\
$^{3}$Department of Mathematics \\ KTH Royal Institute of Technology, Stockholm, Sweden \\
$^{\ast}$Corresponding author, Email: hennyman@abo.fi}

\date{}

\maketitle

\begin{abstract}
Log-linear models are the popular workhorses of analyzing contingency tables. A log-linear parameterization of an interaction model can be more expressive than a direct parameterization based on probabilities, leading to a powerful way of defining restrictions derived from marginal, conditional and context-specific independence. However, parameter estimation is often simpler under a direct parameterization, provided that the model enjoys certain decomposability properties. Here we introduce a cyclical projection algorithm for obtaining maximum likelihood estimates of log-linear parameters under an arbitrary context-specific graphical log-linear model, which needs not satisfy criteria of decomposability. We illustrate that lifting the restriction of decomposability makes the models more expressive, such that additional context-specific independencies embedded in real data can be identified. It is also shown how a context-specific graphical model can correspond to a non-hierarchical log-linear parameterization with a concise interpretation. This observation can pave way to further development of non-hierarchical log-linear models, which have been largely neglected due to their believed lack of interpretability.   
\end{abstract}

\noindent Keywords: Graphical model; Context-specific interaction model; Log-linear model; Parameter estimation.

\section{Introduction}
Log-linear models for contingency tables have enjoyed a wide popularity since their introduction in the 1970's, enabling a comprehensive approach to testing hypotheses of marginal and conditional independence, as well as more detailed global scrutiny of inter-dependencies within a set of discrete variables \citep{Whittaker90, Lauritzen96}. Graphical models have received most of the attention within the class of log-linear models, which is unsurprising given their interpretability and relative ease of model fitting. However, several other dependency structures with a log-linear representation have also been considered, such as hierarchical \citep{Lauritzen96}, pairwise interaction \citep{Whittaker90}, split \citep{Hojsgaard03}, labeled \citep{Corander03a}, and context-specific interaction models \citep{Eriksen99,Hojsgaard04}.

Recently, \citet{Nyman14a} introduced a class of stratified graphical models (SGMs), where strata are defined locally in the outcome space such that a specific pair of variables are independent in the context defined by a combination of values of the joint neighbors of the two variables. This is in contrast to ordinary graphical models, where a pair of variables are always considered either conditionally independent or completely dependent given their joint neighbors. 

The work of \citet{Nyman14a} generalizes the results on labeled graphical models, introduced in \citet{Corander03a}. To be able to obtain an analytical expression for Bayesian model scoring of SGMs, \citet{Nyman14a} restricted their attention to a class of decomposable models under a direct parameterization of the probabilities (rather than log-linear parameterization), similar to the class of graphical models where the majority of model learning approaches have been devised under the assumption of excluding non-chordal graphs from the search space. Despite of the assumption of decomposability, the resulting model class was shown to be expressive for real data and \citet{Nyman14b} additionally illustrated that SGMs can lead to more accurate probabilistic classifiers than those based on standard graphical models.

Since the assumption of decomposability is generally made for computational convenience, rather than being motivated by data met in real applications, it is desirable to develop theory which enables fitting of context-specific graphical log-linear models irrespectively of them being decomposable or non-decomposable. Using the general estimation theory from \citet{Csiszar75}  and \citet{Rudas98}, we introduce a cyclical projection algorithm which can be used to obtain the maximum likelihood estimate for any context-specific graphical log-linear model. This result is of interest on its own, however, to also illustrate the increased expressiveness of unrestricted context-specific graphical log-linear models for real data, we combine the maximum likelihood estimation with approximate Bayesian model scoring to define a search algorithm for the optimal model for a given data set. We additionally briefly illustrate the fact that some context-specific graphical models are also non-hierarchical log-linear models. This is particularly illuminating, since non-hierarchical log-linear models have generally been avoided due to believed lack of apparent interpretation of the parameter restrictions.   

The remaining article is structured as follows. In the next section we define the basic concepts related to graphical and stratified graphical models. In Section 3, log-linear parameterization of an SGM is defined, leading to a context-specific graphical log-linear model, together with some observations concerning model identifiability. In Section 4, we introduce a projection algorithm which is proven to converge to the maximum likelihood estimate for a context-specific graphical log-linear model. In Section 5, we devise an approximate Bayesian model optimization algorithm, based on the Bayesian information criterion and a stochastic search over the model space. The algorithm is illustrated by application to real data in Section 6 and the final section provides some remarks and possibilities for future work.

\section{Stratified graphical models}
\label{secSGM}
To enable the presentation of stratified graphical models, some of the central concepts from the theory of graphical models are first introduced \citep{Nyman14a}. For a comprehensive account of the statistical and computational theory of probabilistic graphical models, see \citet{Whittaker90}, \citet{Lauritzen96}, and \citet{Koller09}. It is assumed throughout this article that all considered variables are binary. However, the introduced theory can readily be extended to finite discrete variables. 

While the terms node and variable are closely related when considering graphical models, we will strive to use the notation $X_{\delta}$ when referring to the variable associated to node $\delta$. Let $G = (\Delta,E)$, be an undirected graph, consisting of a set of nodes $\Delta$ and a set of undirected edges $E\subseteq\{\Delta \times\Delta\}$. Two nodes $\gamma$ and $\delta$ are \textit{adjacent} in a graph if $\{\gamma, \delta\}\in E$, that is an edge exists between them. A \textit{path} in a graph is a sequence of nodes such that for each successive pair within the sequence the nodes are adjacent. A \textit{cycle} is a path that starts and ends with the same node. A \textit{chord} in a cycle is an edge between two non-consecutive nodes in the cycle. Two sets of nodes $A$ and $B$ are said to be \textit{separated} by a third set of nodes $S$ if every path between nodes in $A$ and nodes in $B$ contains at least one node in $S$. A graph is defined as \textit{chordal} if all cycles found in the graph containing four or more unique nodes contains at least one chord.

For a subset of nodes $A \subseteq \Delta$, $G_{A}=(A,E_{A})$ is a subgraph of $G$, such that the nodes in $G_{A}$ are equal to $A$ and the edge set comprises those edges of the original graph for which both nodes are in $A$, i.e. $E_{A} = \{A \times A\} \cap E$. A graph is defined as \textit{complete} when all pairs of nodes in the graph are adjacent. A \textit{clique} in a graph is a set of nodes $A$ such that the subgraph $G_{A}$ is complete. A \textit{maximal clique} $C$ is a clique for which there exists no set of nodes $C^*$ such that $C \subset C^*$ and $G_{C^*}$ is also complete. The set of maximal cliques in the graph $G$ will be denoted by $\mathcal{C}(G)$. The set of \textit{separators}, $\mathcal{S}(G)$, in the chordal graph $G$ can be obtained through intersections of the maximal cliques of $G$ ordered in terms of a junction tree, see e.g. \citet{Golumbic04}.

The outcome space for the variables $X_A$, where $A \subseteq \Delta$, is denoted by $\mathcal{X}_{A}$ and an element in this space by $x_{A} \in \mathcal{X}_{A}$. Given our restriction to binary variables, the cardinality $|\mathcal{X}_{A}|$ of $\mathcal{X}_{A}$ equals $2^{|A|}$. A graphical model is defined by the pair $G=(\Delta,E)$ and the joint distribution $P_{\Delta}$ on the variables $X_{\Delta}$, such that $P_{\Delta}$ fulfills a set of restrictions induced by $G$. If there exists no path between two sets of nodes $A$ and $B$ the two sets of variables $X_A$ and $X_B$ are marginally independent, i.e. $P(X_A, X_B) = P(X_A) P(X_B)$. Similarly two sets of random variables $X_A$ and $X_B$ are conditionally independent given a third set of variables $X_S$, $P(X_A, X_B \mid X_S) = P(X_A \mid X_S) P(X_B \mid X_S)$, if $S$ separates $A$ and $B$ in $G$.

A statement of conditional independence of two variables $X_{\delta}$ and $X_{\gamma}$ given $X_S$ imposes fairly strong restrictions to the joint distribution since the condition $P(X_{\delta}, X_{\gamma} \mid X_S) = P(X_{\delta} \mid X_S) P(X_{\gamma} \mid X_S)$ must hold for any joint outcome of the variables $X_S$. The idea common to context-specific independence models is to lift some of these restrictions to achieve more flexibility in terms of model structure. Exactly which restrictions are allowed to be simultaneously lifted varies considerably over the proposed model classes.

Consider a GM with the complete graph spanning three nodes $\{1, 2, 3\}$, which specifies that there are no conditional independencies among the variables $X_1$, $X_2$, and $X_3$. However, if the probability $P(X_1=1, X_2=x_2, X_3=x_3)$ factorizes into the product $P(X_1=1) P(X_2=x_2 \mid X_1=1) P(X_3=x_3 \mid X_1=1)$ for all outcomes $x_2 \in \{0,1\}, x_3 \in \{0,1\}$, then a simplification of the joint distribution is hiding beneath the graph. This simplification can be included in the graph by adding a condition or \textit{stratum} to the edge $\{2, 3\}$ specifying where the context-specific independence $X_2 \perp X_3 \mid X_1=1$ of the two variables holds, as illustrated in Figure \ref{SGMs}a.
\begin{figure}[htb]
\begin{center}
\includegraphics[width=\textwidth]{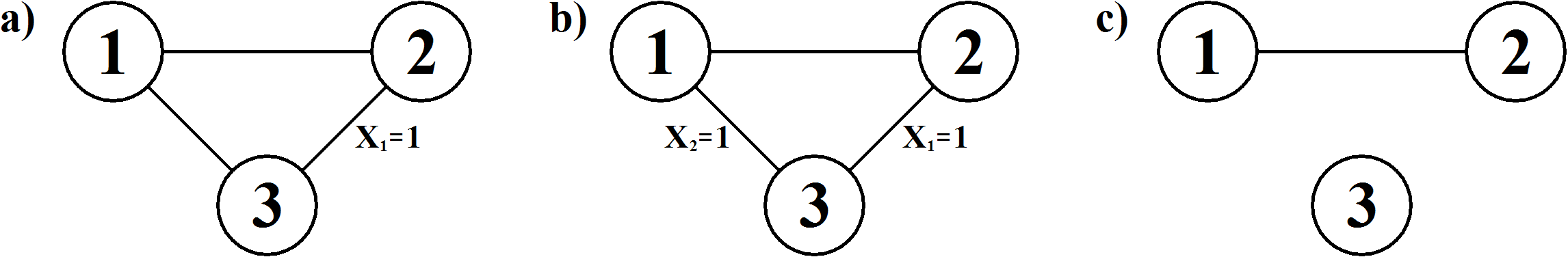}
\end{center}
\caption{Graphical representation of the dependence structures of three variables. In (a) the stratum $X_1=1$ is shown as a condition on the edge $\{2, 3\}$, in (b) the strata $X_1=1$ and $X_2=1$ are shown as conditions on the edges $\{2, 3\}$ and $\{1, 3\}$, respectively, in (c) an ordinary graph with the maximal cliques $\{1, 2\}$ and $\{3\}$.}
\label{SGMs}
\end{figure}
The following is a formal definition of what is intended by a stratum \citep{Nyman14a}.
\begin{definition} \label{stratum} Stratum. Let the pair $(G, P_{\Delta})$ be a graphical model, where $G$ is a chordal graph. For all $\{\delta,\gamma\} \in E$, let $L_{\{\delta,\gamma\}}$ denote the set of nodes adjacent to both $\delta$ and $\gamma$. For a non-empty $L_{\{\delta,\gamma\}}$, define the stratum of the edge  $\{\delta,\gamma\}$ as the subset $\mathcal{L}_{\{\delta,\gamma\}}$ of outcomes $x_{L_{\{\delta,\gamma\}}} \in \mathcal{X}_{L_{\{\delta,\gamma\}}}$ for which $X_{\delta}$ and $X_{\gamma}$ are independent given $X_{L_{\{\delta,\gamma\}}} = x_{L_{\{\delta,\gamma\}}}$, i.e. $\mathcal{L}_{\{\delta,\gamma\}} = \{ x_{L_{\{\delta,\gamma\}}} \in \mathcal{X}_{L_{\{\delta,\gamma\}}} : X_{\delta} \perp X_{\gamma} \mid X_{L_{\{\delta,\gamma\}}} =  x_{L_{\{\delta,\gamma\}}} \}$.
\end{definition}
The requirement that $G$ is chordal is necessary for the definition of a stratum to be generally applicable. Consider the graph in Figure \ref{fig:ND}, note that the graph is not chordal as it contains the chord-less cycle (1, 3, 4, 2, 1). The intended context-specific independence $X_3 \perp X_4 \mid X_5 = 1$ induced by the stratum $\mathcal{L}_{\{3, 4\}} = \{X_5=1\}$ does not hold as nodes $3$ and $4$ are connected via the path $(3, 1, 2, 4)$. By definition, no such paths are possible for chordal graphs, ensuring that given $x_{L_{\{\delta,\gamma\}}} \in \mathcal{L}_{\{\delta,\gamma\}}$ it will hold that $X_{\delta} \perp X_{\gamma} \mid X_{L_{\{\delta,\gamma\}}} =  x_{L_{\{\delta,\gamma\}}}$.
\begin{figure}[htb]
\begin{center}
\includegraphics{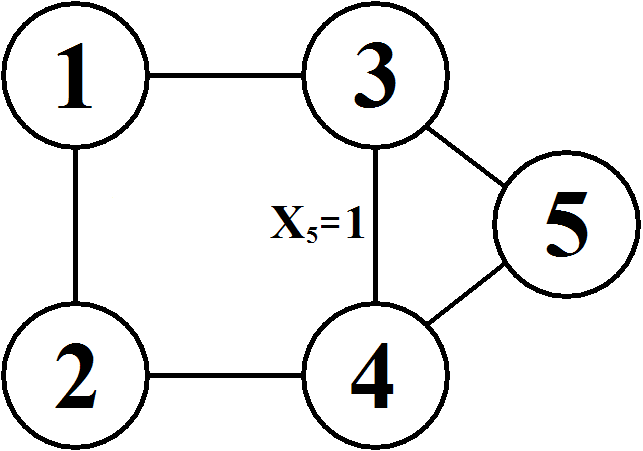}
\end{center}
\caption{Non-chordal graph resulting in the intended context-specific independence $X_3 \perp X_4 \mid X_5 = 1$ not holding.}
\label{fig:ND}
\end{figure}

The idea of context-specific independence generalizes readily to a situation where multiple strata for distinct pairs of variables are considered. Figure \ref{SGMs}b displays the complete graph for three nodes with the edges $\{2, 3\}$ and $\{1, 3\}$ associated with the strata $X_1=1$ and $X_2=1$, respectively. In addition to the context-specific independence statement present in Figure \ref{SGMs}a, here we have the simultaneous restriction that $X_1 \perp X_3 \mid X_2=1$, such that $P(X_1=x_1, X_2=1, X_3=x_3) = P(X_2=1) P(X_1=x_1 \mid X_2=1) P(X_3=x_3 \mid X_2=1)$ for all outcomes $x_1\in \{0,1\}, x_3\in\{0,1\}$. This pair of restrictions does not imply that $P(X_3=x_3) =  P(X_3=x_3 \mid X_1=1, X_2=1)$ as would be the case given the graph in Figure \ref{SGMs}c. It does, however, imply that the information contained about $X_3$ in the knowledge that $X_1=1$ and $X_2=1$ must be the same, i.e. $P(X_3=x_3 \mid X_1=1) = P(X_3=x_3 \mid X_2=1) = P(X_3=x_3 \mid X_1=1, X_2=1)$. The following definition \citep{Nyman14a} formalizes an extension to ordinary graphical models. This defined class of models allow for simultaneous context-specific independence to be represented using a set of strata, partitioning the joint outcome space of the variables $X_{\Delta}$.
\begin{definition}
Stratified graphical model (SGM). A stratified graphical model is defined by the triple $(G, L, P_{\Delta})$, where $G$ is a chordal graph termed as the underlying graph, $L$ equals the joint collection of all strata $\mathcal{L}_{\{\delta,\gamma\}}$ for the edges of $G$, and $P_{\Delta}$ is a joint distribution over the variables $X_{\Delta}$ which factorizes according to the restrictions imposed by $G$ and $L$.
\end{definition}
The pair $(G, L)$ consisting of the graph $G$ with the stratified edges (edges associated with a stratum) determined by $L$ will be referred to as a stratified graph (SG), usually denoted by $G_L$.
To be able to calculate the marginal likelihood of a dataset given an SG and perform model inference, \citet{Nyman14a} specified strict restrictions on the set of stratified edges, limiting the model space to decomposable SGs. In this paper we introduce a method which allows us to remove these restrictions while retaining the ability to perform model inference. Using the log-linear parameterization some new properties for SGMs are also introduced.

\section{SGMs and the log-linear parameterization}
\label{sec:logLin}
In this paper we use two different parameterizations. Firstly, the standard parameterization used for a categorical distribution, where each parameter $\theta_i$ in a parameter vector $\theta$ denotes the probability of a specific outcome $x_{\Delta}^{(i)} \in \mathcal{X}_{\Delta}$, i.e $P(X_{\Delta} = x_{\Delta}^{(i)}) = \theta_i$. And secondly, the log-linear parameterization \citep{Whittaker90, Lauritzen96} defined by the parameter vector $\phi$. For this parameterization, the joint distribution of the variables $X_{\Delta}$ is defined by
\[
\log P(X_{\Delta}=x_{\Delta})=\sum_{A \subseteq \Delta}\phi_{A(x_{A})},
\]
where $x_{A}$ denotes the marginal outcome of variables $X_A$ in the outcome $x_{\Delta}$. For the log-linear parameterization we have the restriction that if $x_{j}=0$ for any $j \in A$, then $\phi_{A(x_{A})}=0$ \citep{Whittaker90}. As we in this paper only consider binary variables a log-linear parameter will henceforth be denoted using the convention $\phi_{A(x_{A})} = \phi_{A}$. The reason for using the log-linear parameterization is that an SG imposes restrictions to $\phi$ in a more manageable manner than it does to $\theta$.

It holds for graphical log-linear models that if the edge $\{\delta,\gamma\}$ is not present in $G$, then all parameters $\phi_{A}$, where $\{\delta ,\gamma\} \subseteq A$, are equal to zero \citep{Whittaker90}. The restrictions imposed to the log-linear parameters by a stratum are also clearly defined.
\begin{theorem}
\label{th:restrictions}
Consider the context-specific independence $X_{\delta} \perp X_{\gamma} \mid X_{L_{\{\delta,\gamma\}}} =  x_{L_{\{\delta,\gamma\}}}$. Let $A \subseteq L_{\{\delta,\gamma\}} \cup \{\delta,\gamma\}$ be the set of variables containing the pair $\{\delta,\gamma\}$ and the set of all variables with non-zero values in $x_{L_{\{\delta,\gamma\}}}$. The parameter restrictions imposed are of the form $\sum_{B \subseteq A} \phi_{B} = 0$, where $\{\delta, \gamma\} \subseteq B$.
\end{theorem}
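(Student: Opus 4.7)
My plan is to recast the context-specific independence as a cross-product-ratio (log-odds) condition on the $2 \times 2$ conditional table of $(X_\delta, X_\gamma)$ given $X_L = x_L$, then expand the resulting log-odds ratio (LOR) using the log-linear parameters. With $L = L_{\{\delta, \gamma\}}$, the CI is equivalent to
\[
\text{LOR} := \log P(1,1,x_L) + \log P(0,0,x_L) - \log P(1,0,x_L) - \log P(0,1,x_L) = 0,
\]
where each probability is a marginal of the joint obtained by summing out the variables in $R' = \Delta \setminus (\{\delta,\gamma\} \cup L)$; the normalizer $P(X_L = x_L)$ cancels in the cross-product ratio. The aim is to show LOR $= \sum_{B \subseteq A,\,\{\delta,\gamma\} \subseteq B} \phi_B$, after which the CI is exactly the vanishing of that sum.

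I would split $\log P(x_\Delta)$ by the role of $\delta, \gamma$:
\[
\log P(x_\Delta) = C_0(x_R) + x_\delta T_\delta(x_R) + x_\gamma T_\gamma(x_R) + x_\delta x_\gamma S_{\delta\gamma}(x_R),
\]
with $R = \Delta \setminus \{\delta, \gamma\}$. The graphical assumption $\phi_B = 0$ for $B$ not contained in a clique forces any $B$ with $\{\delta,\gamma\} \subseteq B$ to satisfy $B \subseteq \{\delta,\gamma\} \cup L$, so $S_{\delta\gamma}$ depends on $x_L$ alone, and the binary convention $\phi_B = 0$ whenever some $x_j = 0$ identifies it with $\sum_{B \subseteq A,\,\{\delta,\gamma\} \subseteq B} \phi_B$. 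The same graphical argument makes $T_\delta$ a function of $x_L$ and $x_D$ only, where $D = N_\delta \cap R'$, and $T_\gamma$ a function of $x_L$ and $x_G$ only, with $G = N_\gamma \cap R'$; moreover $D \cap G = \emptyset$ because any joint neighbor of $\delta$ and $\gamma$ lies in $L$ by definition.

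The hard step is controlling the marginalization over $x_{R'}$: I need to show that passing from the joint to $\log P(x_\delta, x_\gamma, x_L)$ introduces no further $x_\delta x_\gamma$ coupling. For this I would invoke chordality through the component lemma that in the induced subgraph $G[R']$, every connected component meets at most one of $D$ and $G$. To prove it I take a shortest $D$--$G$ path $d = v_0 - v_1 - \cdots - v_k = g$ in $G[R']$ and combine it with the edges $\{\delta, d\}, \{g, \gamma\}, \{\gamma, \delta\}$ to obtain a cycle of length $k+3 \ge 4$; every possible chord is ruled out, either by the definitions of $D, G, L$ (edges of the form $\{\delta, v_i\}, \{v_i, \gamma\}, \{\delta, g\}, \{d, \gamma\}$ are forbidden) or by minimality (any of $\{v_i, v_j\}, \{d, v_j\}, \{v_i, g\}$ would furnish a strictly shorter $D$--$G$ path). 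This contradicts chordality. Consequently every non-zero $\phi_B$ is supported on a single component of $G[R']$, which is purely $\delta$-side, purely $\gamma$-side, or neutral; the sum $\sum_{x_{R'}} \exp(C_0 + x_\delta T_\delta + x_\gamma T_\gamma)$ therefore factorizes as $F_0(x_L)\, F_\delta(x_\delta, x_L)\, F_\gamma(x_\gamma, x_L)$.

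Putting it together, $\log P(x_\delta, x_\gamma, x_L)$ becomes a sum of (i) $x_L$-only terms, (ii) $\log F_\delta$ additive in $x_\delta$ alone, (iii) $\log F_\gamma$ additive in $x_\gamma$ alone, and (iv) the explicit $x_\delta x_\gamma S_{\delta\gamma}(x_L)$. In the LOR, groups (i)--(iii) all cancel under the $+,+,-,-$ sign pattern, and only $S_{\delta\gamma}(x_L)$ survives, so LOR equals $\sum_{B \subseteq A,\,\{\delta,\gamma\} \subseteq B} \phi_B$ as required. The main obstacle is the chordality-based component lemma; without chordality, outside paths can tunnel a dependence between $x_\delta$ and $x_\gamma$ through the marginalization, turning the imposed restriction into a genuinely nonlinear condition on the $\phi$'s rather than the clean linear form claimed by the theorem.
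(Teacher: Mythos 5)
Your proof is correct, but it takes a genuinely different route from the paper's. The paper never marginalizes over the outside variables: it conditions on a \emph{full} configuration $X_{\Omega}=x_{\Omega}$ of $\Omega=\Delta\setminus(L_{\{\delta,\gamma\}}\cup\{\delta,\gamma\})$, rewrites the independence as an equality of cross-product ratios of \emph{joint} probabilities, and lets the log-linear expansion telescope directly to $\sum_{a}\phi_a=0$ over the sets containing $\{\delta,\gamma\}$ and any non-zero coordinates of $x_{L}$ or $x_{\Omega}$; the only graph-theoretic input is that no $\zeta\in\Omega$ is adjacent to both $\delta$ and $\gamma$, which kills every term involving an $\Omega$-node. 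Chordality enters the paper's argument only implicitly, through the informal discussion around Figure \ref{fig:ND} that licenses passing from the marginal statement $X_{\delta}\perp X_{\gamma}\mid X_{L}=x_{L}$ to the fully conditioned one. You instead take Definition \ref{stratum} literally, work with the marginal table $P(x_{\delta},x_{\gamma},x_{L})$, and therefore must show that summing out $X_{\Omega}$ introduces no extra $x_{\delta}x_{\gamma}$ coupling; your component lemma (no connected component of the induced subgraph on $\Omega$ meets both $N_{\delta}\cap\Omega$ and $N_{\gamma}\cap\Omega$, proved via a chord-free cycle of length at least four) is exactly the right tool, and your check that its chords are all excluded is sound. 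What your version buys is a precise, self-contained account of where chordality is actually used and a clean two-way equivalence between the context-specific independence and the linear restriction; what the paper's version buys is brevity, since conditioning on everything makes the cancellation a two-line computation. The two arguments meet at the same final step: the surviving interaction $S_{\delta\gamma}(x_{L})$ equals $\sum_{B\subseteq A,\ \{\delta,\gamma\}\subseteq B}\phi_{B}$ by the binary convention that $\phi_{B(x_B)}$ vanishes unless all coordinates of $x_{B}$ are non-zero.
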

\begin{proof}
We start by defining the operator $\mathcal{D}(A;B) = \{ \{A \cup C \} : C \subseteq B\}$. Let $\Omega = \Delta \setminus \{L_{\{\delta, \gamma\}} \cup \{\delta, \gamma\}\}$ denote the set of nodes not in $L_{\{\delta, \gamma\}}$ or $\{\delta, \gamma\}$. Given that $X_{\delta} \perp X_{\gamma} \mid X_{L_{\{\delta,\gamma\}}} =  x_{L_{\{\delta,\gamma\}}}$ we get that
\[
\begin{split}
& \frac{P(X_{\delta} = 0 \mid X_{\gamma} = 0, X_{L_{\{\delta, \gamma\}}} = x_{L_{\{\delta,\gamma\}}}, X_{\Omega} = x_{\Omega})}{P(X_{\delta} = 1 \mid X_{\gamma} = 0, X_{L_{\{\delta, \gamma\}}} = x_{L_{\{\delta,\gamma\}}}, X_{\Omega} = x_{\Omega})} = \\ 
& \frac{P(X_{\delta} = 0 \mid X_{\gamma} = 1, X_{L_{\{\delta, \gamma\}}} = x_{L_{\{\delta,\gamma\}}}, X_{\Omega} = x_{\Omega})}{P(X_{\delta} = 1 \mid X_{\gamma} = 1, X_{L_{\{\delta, \gamma\}}} = x_{L_{\{\delta,\gamma\}}}, X_{\Omega} = x_{\Omega})} \\
\end{split}
\]
\[
\Longleftrightarrow
\]
\begin{equation}
\label{eq:logLin}
\begin{split}
& \frac{P(X_{\delta} = 0, X_{\gamma} = 0, X_{L_{\{\delta, \gamma\}}} = x_{L_{\{\delta,\gamma\}}}, X_{\Omega} = x_{\Omega})}{P(X_{\delta} = 1, X_{\gamma} = 0, X_{L_{\{\delta, \gamma\}}} = x_{L_{\{\delta,\gamma\}}}, X_{\Omega} = x_{\Omega})} = \\ 
& \frac{P(X_{\delta} = 0, X_{\gamma} = 1, X_{L_{\{\delta, \gamma\}}} = x_{L_{\{\delta,\gamma\}}}, X_{\Omega} = x_{\Omega})}{P(X_{\delta} = 1, X_{\gamma} = 1, X_{L_{\{\delta, \gamma\}}} = x_{L_{\{\delta,\gamma\}}}, X_{\Omega} = x_{\Omega})}.
\end{split}
\end{equation}
Let $Z$ denote the set of nodes corresponding to variables with non-zero outcome in $x_{L_{\{\delta,\gamma\}}}$ or $x_{\Omega}$. 
Using the log-linear parameterization, equation \eqref{eq:logLin} results in
\[
\sum_{a \subseteq \mathcal{D}(\varnothing; Z)} \hspace{-0.3cm} \phi_a \quad - \sum_{a \subseteq \mathcal{D}(\varnothing; \delta \cup Z)} \hspace{-0.5cm} \phi_a \quad = \sum_{a \subseteq \mathcal{D}(\varnothing; \gamma \cup Z)} \hspace{-0.5cm} \phi_a \quad - \sum_{a \subseteq \mathcal{D}(\varnothing; \{\delta, \gamma \} \cup Z)} \hspace{-0.7cm} \phi_a \qquad \Rightarrow
\]
\[
\sum_{a \subseteq \mathcal{D}(\delta; Z)} \hspace{-0.3cm} \phi_a \quad = \sum_{a \subseteq \mathcal{D}(\delta; \gamma \cup Z)} \hspace{-0.3cm} \phi_a \qquad \Rightarrow \qquad \sum_{a \subseteq \mathcal{D}(\{\delta, \gamma \}; Z)} \hspace{-0.3cm} \phi_a \quad = \quad 0.
\]
However, if a node $\zeta \in \Omega$, it cannot be adjacent to both $\delta$ and $\gamma$. Consequently, any parameter $\phi_A$ such that $\{\delta, \gamma, \zeta\} \subseteq A$ is restricted to zero. Therefore, if $L_{Z}$ denotes the nodes corresponding to the variables with non-zero outcome in $x_{L_{\{\delta,\gamma\}}}$ the restriction induced by the stratum can be written
\[
\sum_{a \subseteq \mathcal{D}(\{\delta, \gamma \}; L_{Z})} \hspace{-0.5cm} \phi_a \quad = \quad 0,
\]
which corresponds to what is stated in the theorem.
\end{proof}
\noindent As an example consider the SG in Figure \ref{SGMs}a. The context-specific independence $X_{2} \perp X_{3} \mid X_1=1$ induces the log-linear parameter restriction
\[
\begin{split}
& \phi_{\varnothing} + \phi_{1} - \phi_{\varnothing} - \phi_{1} - \phi_{2} - \phi_{1,2} = \\
& \phi_{\varnothing} + \phi_{1} + \phi_{3} + \phi_{1,3} - \phi_{\varnothing} - \phi_{1} - \phi_{2} - \phi_{3} - \phi_{1,2} - \phi_{1,3} - \phi_{2,3} - \phi_{1,2,3}  \Rightarrow \\
& \phi_{2,3} + \phi_{1,2,3} = 0.
\end{split}
\]

In the definition of a stratum on the edge $\{\delta, \gamma\}$, the variables that determine the stratum correspond to the nodes that are adjacent to both $\delta$ and $\gamma$. This is a natural definition rather than an invented restriction.
\begin{theorem}
\label{th:adjacent}
Only the variables corresponding to nodes adjacent to both $\delta$ and $\gamma$ may define a context-specific independence between $X_{\delta}$ and $X_{\gamma}$.
\end{theorem}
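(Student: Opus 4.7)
The plan is to mirror the derivation in Theorem~\ref{th:restrictions}, but applied to a hypothetical conditioning set $S$ that strictly contains $L_{\{\delta,\gamma\}}$. Specifically, I would attempt to impose the CSI $X_\delta \perp X_\gamma \mid X_S = x_S$ with some $\zeta \in S$ not adjacent to both $\delta$ and $\gamma$, and then show that the resulting log-linear parameter restriction is identical to the one obtained after dropping $\zeta$ from $S$. Once this is established, the CSI is actually determined by $S \setminus \{\zeta\}$, so $\zeta$ contributes nothing to its specification, and iterating yields the claim that only nodes in $L_{\{\delta,\gamma\}}$ can define the independence.

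First I would repeat the conditional-probability-ratio manipulation that opens the proof of Theorem~\ref{th:restrictions}, with $L_{\{\delta,\gamma\}}$ replaced by the enlarged $S$. Passing through the log-linear parameterization exactly as in that proof yields a restriction of the form $\sum_{B \subseteq A} \phi_B = 0$ with $\{\delta,\gamma\} \subseteq B$, where $A$ consists of $\{\delta,\gamma\}$ together with the nodes of $S$ taking non-zero value in $x_S$.

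Next, since $\zeta \notin L_{\{\delta,\gamma\}}$, the node $\zeta$ fails to be adjacent to at least one of $\delta,\gamma$; without loss of generality assume $\{\delta,\zeta\} \notin E$. By the graphical Markov property of log-linear models (the same fact invoked at the end of the proof of Theorem~\ref{th:restrictions}), every $\phi_B$ with $\{\delta,\zeta\} \subseteq B$ vanishes. In particular, every summand of the restriction that contains $\zeta$ also contains $\{\delta,\gamma,\zeta\}$ and is therefore zero. Dropping these null terms leaves precisely the restriction that would have been produced by the smaller conditioning set $S \setminus \{\zeta\}$ with value $x_{S \setminus \{\zeta\}}$.

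The main subtlety I expect is the bookkeeping across the two cases $x_\zeta = 0$ and $x_\zeta = 1$. When $x_\zeta = 0$, $\zeta$ is already absent from $A$ by the convention that $\phi_{B(x_B)}$ vanishes whenever some coordinate of $x_B$ is zero, so the restriction manifestly does not involve $\zeta$. When $x_\zeta = 1$, the argument above shows the $\zeta$-bearing terms drop out because of the missing edge $\{\delta,\zeta\}$. In either case the value $x_\zeta$ is irrelevant to the restriction, which confirms that only variables corresponding to common neighbors of $\delta$ and $\gamma$ can genuinely participate in defining a context-specific independence between $X_\delta$ and $X_\gamma$.
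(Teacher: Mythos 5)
Your proposal is correct and follows essentially the same route as the paper's own proof: impose the enlarged conditioning set, derive the restriction as in Theorem~\ref{th:restrictions}, and observe that in both the $x_\zeta=0$ and $x_\zeta\neq 0$ cases the $\zeta$-bearing terms vanish because every $\phi_B$ with $\{\delta,\gamma,\zeta\}\subseteq B$ is already zero from the missing edge, so the restriction collapses to the one defined by the common neighbors alone. Your explicit justification via the missing edge $\{\delta,\zeta\}$ and the iteration over multiple extraneous nodes are minor elaborations of the same argument.
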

\begin{proof}
The proof of this theorem follows from Theorem \ref{th:restrictions}. We assume that a variable $X_{\zeta}$, such that the node $\zeta$ is not adjacent to both $\delta$ and $\gamma$, is included when defining the context-specific independence $X_{\delta} \perp X_{\gamma} \mid X_{L_{\{\delta,\gamma\}}} =  x_{L_{\{\delta,\gamma\}}}, X_{\zeta} = x_{\zeta}$. If $x_{\zeta} = 0$, we would get the same restriction as by not including $X_{\zeta}$ in the conditioning set
\begin{equation}
\label{eq:same1}
\sum_{a \subseteq \mathcal{D}(\{\delta, \gamma \}; L_{Z})} \hspace{-0.5cm} \phi_a \quad = \quad 0.
\end{equation}
If $x_{\zeta} \neq 0$ we get the restriction
\begin{equation}
\label{eq:same2}
\sum_{a \subseteq \mathcal{D}(\{\delta, \gamma \}; \{L_{Z}, \zeta\})} \hspace{-0.5cm} \phi_a \quad = \quad 0,
\end{equation}
again we know from the underlying graph that any parameter $\phi_A$ such that $\{\delta, \gamma, \zeta\} \subseteq A$ is restricted to zero, resulting in \eqref{eq:same1} and \eqref{eq:same2} being equivalent restrictions.
\end{proof}
\noindent As an example consider the graphs in Figure \ref{fig:logLin}a and \ref{fig:logLin}b. 
\begin{figure}[htb]
\begin{center}
\includegraphics[width=\textwidth]{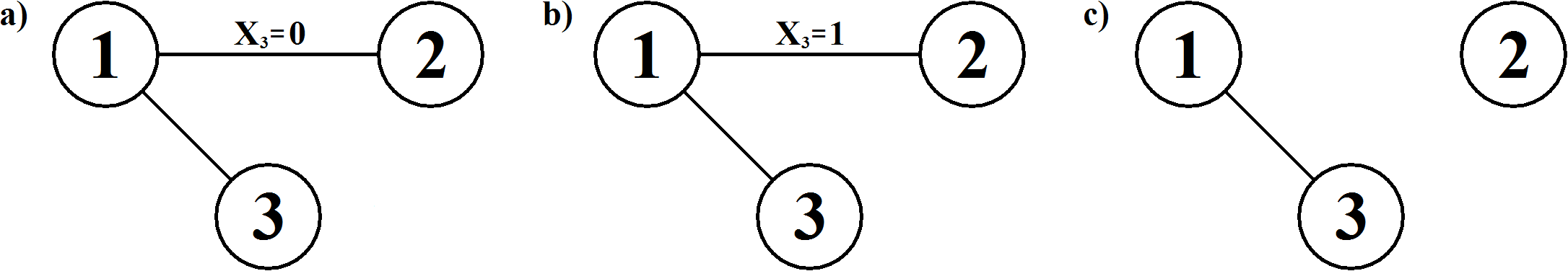}
\end{center}
\caption{Graphs in a) and b) include improper strata, which if allowed would lead to the same parameter restrictions as the graph in c).}
\label{fig:logLin}
\end{figure}
Here $X_3$ determines the stratum of the edge $\{1,2\}$, although node $2$ and node $3$ are non-adjacent. The underlying graph establishes that $X_2 \perp X_3 \mid X_1$. However, given the proposed stratum $X_3$ can indirectly affect $X_2$ by determining whether or not $X_1$ and $X_2$ are dependent, which is an obvious contradiction. The underlying graph induces the parameter restrictions $\phi_{2, 3} = \phi_{1, 2, 3}=0$. The stratum included in Figure \ref{fig:logLin}a results in the restriction $\phi_{1, 2} = 0$, while the stratum included in Figure \ref{fig:logLin}b results in the restriction $\phi_{1, 2} + \phi_ {1, 2, 3} = 0$, i.e. $\phi_{1, 2} = 0$. This means that the graphs in Figure \ref{fig:logLin} would all induce the same restrictions, $\phi_{1, 2} = \phi_{2, 3} = \phi_{1, 2, 3}=0$. Note that this example is a special case of Theorem \ref{th:adjacent} as $L_{\{1, 2\}} = \varnothing$.

\citet{Whittaker90} termed a log-linear model as hierarchical if, whenever a parameter $\phi_a = 0$ then $\phi_t=0$ for all $a \subseteq t$. \citet[p.~209]{Whittaker90} further states that ``A non-hierarchical model is not necessarily uninteresting; it is just that the focus of interest is something other than independence". This statement does not apply to SGMs, as shown in Theorem \ref{th:whittaker}.
\begin{theorem}
\label{th:whittaker}
Some, but not all, SGMs are non-hierarchical models. 
\end{theorem}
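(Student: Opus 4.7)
The plan is to prove the two directions separately, by exhibiting concrete SGMs of each type. For the easier direction ("not all SGMs are non-hierarchical"), I would observe that the class of SGMs properly contains ordinary graphical log-linear models: just take any stratified graph $G_L$ whose stratum collection $L$ is empty on every edge. For such a model the only parameter restrictions are the classical graphical ones, namely $\phi_A = 0$ whenever $A$ contains some non-edge $\{\delta,\gamma\} \notin E$. Since any superset of such an $A$ still contains $\{\delta,\gamma\}$, the corresponding $\phi_t$ is also zero, so the hierarchy condition is satisfied. This immediately produces hierarchical SGMs.

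For the harder direction ("some SGMs are non-hierarchical"), I would construct an explicit small example and read off the restrictions using Theorem \ref{th:restrictions}. The natural candidate is the complete graph on three nodes $\{1,2,3\}$ equipped with the single stratum $\mathcal{L}_{\{2,3\}} = \{X_1 = 0\}$, encoding $X_2 \perp X_3 \mid X_1 = 0$. The key observation is that when the context outcome $x_{L_{\{2,3\}}}$ is the all-zero vector, the set $A$ in Theorem \ref{th:restrictions} collapses to $A=\{2,3\}$, so the sum $\sum_{B\subseteq A,\ \{2,3\}\subseteq B} \phi_B$ has only one term and the restriction becomes the singleton equation $\phi_{\{2,3\}} = 0$. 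No other restriction is placed on $\phi_{\{1,2,3\}}$, which is therefore generically nonzero. Since $\{2,3\}\subset\{1,2,3\}$, having $\phi_{\{2,3\}}=0$ while $\phi_{\{1,2,3\}}\neq 0$ violates the hierarchy condition of \citet{Whittaker90}, giving a non-hierarchical SGM.

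Conceptually, the only subtle step is recognising \emph{why} the all-zero context gives a single-term rather than a multi-term restriction: the indexing set $A$ in Theorem \ref{th:restrictions} is determined by which conditioning variables take nonzero values, so choosing the all-zero context shrinks $A$ to $\{\delta,\gamma\}$ and eliminates the higher-order interaction parameters from the sum. Once this is pointed out, the non-hierarchical example is essentially mechanical, and I do not anticipate any genuine obstacle beyond making this observation explicit.
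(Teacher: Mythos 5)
Your proposal is correct and matches the paper's own proof essentially verbatim: the paper also uses a strata-free SG for the hierarchical case and the complete three-node graph with the stratum $X_2 \perp X_3 \mid X_1 = 0$ (yielding $\phi_{2,3}=0$ with $\phi_{1,2,3}$ unrestricted) for the non-hierarchical case. Your added remark about why the all-zero context collapses the restriction to a single term is a helpful clarification but not a departure from the paper's argument.
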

\begin{proof}
This theorem can be proved using two simple examples. First, consider an SG containing no strata, the parameter restrictions of this model will equal those of an ordinary graphical log-linear model which is a hierarchical model. Now consider the SG attained by replacing the stratum $X_{2} \perp X_{3} \mid X_1=1$ in Figure \ref{SGMs}a with $X_{2} \perp X_{3} \mid X_1=0$, this leads to the single parameter restriction $\phi_{2, 3} = 0$. As the parameter $\phi_{1, 2, 3}$ is unrestricted the model is non-hierarchical.
\end{proof}

\section{Parameter estimation for log-linear SGMs}

Let $\Theta_{G}$ denote the set of distributions satisfying the restrictions imposed by the chordal graph $G$. \citet[p.~91]{Lauritzen96} showed that given an observed distribution $P$ the maximum likelihood (ML) projection to $\Theta_{G}$, resulting in the distribution $\hat{P}$, is obtained by setting
\begin{equation}
\hat{\theta}_{i}=\hat{P}(x_{\Delta}^{(i)})=\frac{\prod_{C \in \mathcal{C}(G)}P(x_{C}^{(i)})}{\prod_{S \in \mathcal{S}(G)}P(x_{S}^{(i)})}, \ i=1, \ldots, |\mathcal{X}_{\Delta}|.
\label{eq:cliSep}
\end{equation}
Given the following definition of the Kullback-Leibler (KL) divergence
\[
D_{KL}(P | \hat{P}) = \sum_{x_{\Delta} \in \mathcal{X}_{\Delta}} \log \left( \frac{P(x_{\Delta})}{\hat{P}(x_{\Delta})} \right) P(x_{\Delta}),
\] 
\citet[p.~238]{Lauritzen96} also showed that the ML projection corresponds to finding the distribution that minimizes $D_{KL}$ in the second argument, i.e.
\[
\hat{P}=\arg \min_{Q \in \Theta_{G}} D_{KL}(P, Q).
\]
We shall later refer to the minimum discrimination information (MDI) projection, resulting in a distribution $\hat{R}$ given a distribution $R$. The MDI projection is also defined through the KL divergence but in this case as the distribution that minimizes $D_{KL}$ in the first argument, i.e.
\[
\hat{R}=\arg \min_{Q \in \Theta_{G}} D_{KL}(Q, R).
\]

The ML projection for imposing a single context-specific independence on a distribution can also be written in closed form. Consider an outcome $x_{L_{\{\delta,\gamma\}}} \in \mathcal{L}_{\{\delta,\gamma\}}$, which implies the context-specific independence $X_{\delta} \perp X_{\gamma} \mid X_{L_{\{\delta,\gamma\}}} = x_{L_{\{\delta, \gamma\}}}$. If we by $\Omega = \Delta \setminus \{L_{\{\delta, \gamma\}} \cup \{\delta, \gamma\}\}$ denote all nodes not in $L_{\{\delta, \gamma\}}$ or $\{\delta, \gamma\}$, the probability 
\[
P(X_{\Delta} = x_{\Delta}) = P(X_{L_{\{\delta,\gamma\}}} = x_{L_{\{\delta,\gamma\}}}, X_{\Omega} = x_{\Omega}, X_{\delta} = x_{\delta}, X_{\gamma} = x_{\gamma})
\]
for any $x_{\Omega} \in \mathcal{X}_{\Omega}$ can be factorized as
\[
\begin{split}
& P(X_{L_{\{\delta,\gamma\}}} = x_{L_{\{\delta,\gamma\}}}, X_{\Omega} = x_{\Omega}) 
P(X_{\delta} = x_{\delta} \mid X_{L_{\{\delta, \gamma\}}} = x_{L_{\{\delta, \gamma\}}}, X_{\Omega} = x_{\Omega}) \\
& P(X_{\gamma} = x_{\gamma} \mid X_{L_{\{\delta,\gamma\}}} = x_{L_{\{\delta,\gamma\}}}, X_{\Omega} = x_{\Omega}).
\end{split}
\]
Using the following abbreviated notation for $\theta$ (and correspondingly for $\hat{\theta}$)
\[
\begin{split}
\theta_{0,0} &  = P(X_{L_{\{\delta, \gamma\}}} = x_{L_{\{\delta,\gamma\}}}, X_{\Omega} = x_{\Omega}, X_{\delta} = 0, X_{\gamma} = 0), \\
\theta_{0,1} &  = P(X_{L_{\{\delta, \gamma\}}} = x_{L_{\{\delta,\gamma\}}}, X_{\Omega} = x_{\Omega}, X_{\delta} = 0, X_{\gamma} = 1), \\
\theta_{1,0} &  = P(X_{L_{\{\delta, \gamma\}}} = x_{L_{\{\delta,\gamma\}}}, X_{\Omega} = x_{\Omega}, X_{\delta} = 1, X_{\gamma} = 0), \\
\theta_{1,1} &  = P(X_{L_{\{\delta, \gamma\}}} = x_{L_{\{\delta,\gamma\}}}, X_{\Omega} = x_{\Omega}, X_{\delta} = 1, X_{\gamma} = 1),
\end{split}
\]
we determine the values $\hat{\theta}_{0,0}$, $\hat{\theta}_{0,1}$, $\hat{\theta}_{1,0}$, and $\hat{\theta}_{1,1}$ according to
\begin{equation}
\begin{split}
\hat{\theta}_{0,0} &  =(\theta_{0,0}+\theta_{0,1})\cdot(\theta_{0,0}+\theta
_{1,0})/(\theta_{0,0}+\theta_{0,1}+\theta_{1,0}+\theta_{1,1}),\\
\hat{\theta}_{0,1} &  =(\theta_{0,0}+\theta_{0,1})\cdot(\theta_{0,1}+\theta
_{1,1})/(\theta_{0,0}+\theta_{0,1}+\theta_{1,0}+\theta_{1,1}),\\
\hat{\theta}_{1,0} &  =(\theta_{1,0}+\theta_{1,1})\cdot(\theta_{0,0}+\theta
_{1,0})/(\theta_{0,0}+\theta_{0,1}+\theta_{1,0}+\theta_{1,1}),\\
\hat{\theta}_{1,1} &  =(\theta_{1,0}+\theta_{1,1})\cdot(\theta_{0,1}+\theta
_{1,1})/(\theta_{0,0}+\theta_{0,1}+\theta_{1,0}+\theta_{1,1}).
\end{split}
\label{eq:theta}
\end{equation}
A detailed derivation of the projection defined above is given in Appendix A. Repeating the procedure defined in \eqref{eq:theta} for all $x_{\Omega} \in \mathcal{X}_{\Omega}$ will result in the ML projection of $P$ satisfying the context-specific independence $X_{\delta} \perp X_{\gamma} \mid X_{L_{\{\delta,\gamma\}}} = x_{L_{\{\delta, \gamma\}}}$.

\subsection{Maximum likelihood estimation for SGs}
By cyclically repeating the projections according to \eqref{eq:cliSep} and \eqref{eq:theta} for all instances found in the set of strata $L$ until convergence is achieved, the resulting parameter vector will be the maximum likelihood estimate that simultaneously satisfies all the restrictions imposed by $G_{L}$. In order to prove this we first need to define the following family of probability distributions.

\begin{definition}
Let $X_{\delta}$ and $X_{\gamma}$ be two variables in $X_{\Delta}$, $X_A$ a subset of $X_{\Delta \setminus \{\delta, \gamma\}}$ and $X_{\Omega} = X_{\Delta \setminus \{A \cup \{\delta, \gamma\}\}}$. $\mathcal{F}_{\delta, \gamma}(X_A=x_A, Q)$, where $Q$ is an arbitrary probability distribution, is defined as the set of probability distributions for which the following properties hold for all possible values $x_{\delta}$, $x_{\gamma}$ and $x_\Omega$.
\[
\begin{split}
&\mathcal{F}_{\delta, \gamma}(X_A=x_A, Q) = \\
& \{P: P(X_A=x_A, X_{\Omega}=x_{\Omega}) = Q(X_A=x_A, X_{\Omega}=x_{\Omega})\} \ \cap \\
&\{P: P(X_{\delta}=x_{\delta} | X_A=x_A, X_{\Omega}=x_{\Omega}) = Q(X_{\delta}=x_{\delta} | X_A=x_A, X_{\Omega}=x_{\Omega}) \} \ \cap \\
&\{P: P(X_{\gamma}=x_{\gamma} | X_A=x_A, X_{\Omega}=x_{\Omega}) = Q(X_{\gamma}=x_{\gamma} | X_A=x_A, X_{\Omega}=x_{\Omega}) \} \ \cap \\
&\{P: P(X_{\Delta}=y_{\Delta})=Q(X_{\Delta}=y_{\Delta}) \text{, when $y_{\Delta}$ is a outcome where $x_A \neq y_A$} \}.
\end{split}
\]
\end{definition}
A set of probability distributions $\mathscr{C}$ is defined as a \textit{linear set} if $P_1 \in \mathscr{C}$ and $P_2 \in \mathscr{C}$ results in $\alpha P_1 + (1 - \alpha)P_2$ also belonging to $\mathscr{C}$ for every real $\alpha$ for which it is a probability distribution \citep{Csiszar75}. 
\begin{lemma}
$\mathcal{F}_{\delta, \gamma}(X_A=x_A, Q)$ constitutes a linear set.
\end{lemma}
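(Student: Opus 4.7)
The plan is to verify directly that any affine combination $P = \alpha P_1 + (1-\alpha) P_2$ of two members of $\mathcal{F}_{\delta,\gamma}(X_A=x_A, Q)$, provided it is a probability distribution, satisfies all four defining conditions. Three of the four conditions are immediate; the only subtle point is that conditional probabilities are not linear functionals of the joint distribution, so conditions 2 and 3 require a brief rewriting to expose their linear structure.

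First I would observe that conditions 1 and 4 are manifestly linear: both state that $P$ coincides with $Q$ on a specified collection of outcomes, so if $P_1$ and $P_2$ both match $Q$ on that collection, then so does $\alpha P_1 + (1-\alpha) P_2$ regardless of $\alpha$.

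The main step is handling conditions 2 and 3. For condition 2, I would rewrite the constraint by clearing the denominator: it is equivalent to
\[
P(X_\delta = x_\delta, X_A = x_A, X_\Omega = x_\Omega) = Q(X_\delta = x_\delta \mid X_A = x_A, X_\Omega = x_\Omega) \cdot P(X_A = x_A, X_\Omega = x_\Omega).
\]
Invoking condition 1, which fixes $P(X_A = x_A, X_\Omega = x_\Omega) = Q(X_A = x_A, X_\Omega = x_\Omega)$, the right-hand side becomes a constant independent of $P$, and so the constraint is a genuinely linear (affine) equation in the joint probabilities. Condition 3 is handled identically with $\gamma$ in place of $\delta$. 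Because $P_1$ and $P_2$ each satisfy this fixed linear equation, so does $\alpha P_1 + (1-\alpha) P_2$. After verifying these for every choice of $x_\delta$, $x_\gamma$, $x_\Omega$, the set $\mathcal{F}_{\delta,\gamma}(X_A=x_A, Q)$ is the intersection of finitely many affine hyperplanes and thus an affine subspace of the space of signed measures; its intersection with the probability simplex is, by definition, a linear set in the sense of \citet{Csiszar75}.

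The mildly tricky step is the algebraic one above: one must not forget that condition 1 is doing the real work that turns an ostensibly nonlinear constraint on conditionals into a plain linear constraint on joint probabilities. Once that is noted, the argument is routine.
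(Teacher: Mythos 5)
Your proof is correct and follows essentially the same route as the paper's: both hinge on the observation that condition 1 pins down $P(X_A=x_A, X_\Omega=x_\Omega)$ to $Q$'s value, which converts the ostensibly nonlinear conditional constraints into linear constraints on the joint probabilities. The paper verifies closure under the affine combination by an explicit computation of $P^*$'s conditional, whereas you invoke the general fact that affine constraints are preserved under affine combinations, but the substance is identical.
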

\begin{proof}
Let $P_1$ and $P_2$ be two probability distributions in $\mathcal{F}_{\delta, \gamma}(X_A=x_A, Q)$, we then need to prove that $P^*=\alpha P_1 + (1 - \alpha)P_2$ also belongs to $\mathcal{F}_{\delta, \gamma}(X_A=x_A, Q)$. It is trivial to show that $P^*(X_A=x_A, X_{\Omega}=x_{\Omega}) = Q(X_A=x_A, X_{\Omega}=x_{\Omega})$ and that $P^*(X_{\Delta}=y_{\Delta}) = Q(X_{\Delta}=y_{\Delta})$, when $y_{\Delta}$ is a outcome where $x_A \neq y_A$. The non-trivial part consists of showing that $P^*(X_{\delta}=x_{\delta} | X_A=x_A, X_{\Omega}=x_{\Omega}) = Q(X_{\delta}=x_{\delta} | X_A=x_A, X_{\Omega}=x_{\Omega})$. We start from the fact that
\[
\begin{split}
Q(X_{\delta}=x_{\delta} | X_A=x_A, X_{\Omega}=x_{\Omega}) 
&= P_1(X_{\delta}=x_{\delta} | X_A=x_A, X_{\Omega}=x_{\Omega}) \\ 
&= P_2(X_{\delta}=x_{\delta} | X_A=x_A, X_{\Omega}=x_{\Omega}),
\end{split}
\]
which implicates that
\[
\frac{P_1(X_{\delta}=x_{\delta}, X_A=x_A, X_{\Omega}=x_{\Omega})}{P_1(X_A=x_A, X_{\Omega}=x_{\Omega})} = \frac{P_2(X_{\delta}=x_{\delta}, X_A=x_A, X_{\Omega}=x_{\Omega})}{P_2(X_A=x_A, X_{\Omega}=x_{\Omega})}.
\]
From the definition of $\mathcal{F}_{\delta, \gamma}(X_A=x_A, Q)$ we know that
\[
P_1(X_A=x_A, X_{\Omega}=x_{\Omega}) = P_2(X_A=x_A, X_{\Omega}=x_{\Omega}),
\]
and can therefore deduce that
\[
\begin{split}
Q(X_{\delta}=x_{\delta}, X_A=x_A, X_{\Omega}=x_{\Omega}) &= P_1(X_{\delta}=x_{\delta}, X_A=x_A, X_{\Omega}=x_{\Omega}) \\
&= P_2(X_{\delta}=x_{\delta}, X_A=x_A, X_{\Omega}=x_{\Omega}).
\end{split}
\]
For $P^*$ this means that
\[
\begin{split}
&P^*(X_{\delta}=x_{\delta} | X_A=x_A, X_{\Omega}=x_{\Omega}) = \\
&\frac{\alpha P_1(X_{\delta}=x_{\delta}, X_A=x_A, X_{\Omega}=x_{\Omega}) + (1-\alpha) P_2(X_{\delta}=x_{\delta}, X_A=x_A, X_{\Omega}=x_{\Omega})}
{\alpha P_1(X_A=x_A, X_{\Omega}=x_{\Omega})+ (1-\alpha) P_2(X_A=x_A, X_{\Omega}=x_{\Omega})} = \\
&\frac{\alpha Q(X_{\delta}=x_{\delta}, X_A=x_A, X_{\Omega}=x_{\Omega}) + (1-\alpha) Q(X_{\delta}=x_{\delta}, X_A=x_A, X_{\Omega}=x_{\Omega})}
{\alpha Q(X_A=x_A, X_{\Omega}=x_{\Omega})+ (1-\alpha) Q(X_A=x_A, X_{\Omega}=x_{\Omega})} = \\
&\frac{Q(X_{\delta}=x_{\delta}, X_A=x_A, X_{\Omega}=x_{\Omega})}
{Q(X_A=x_A, X_{\Omega}=x_{\Omega})}  = Q(X_{\delta}=x_{\delta} | X_A=x_A, X_{\Omega}=x_{\Omega}).
\end{split}
\]
Of course the same reasoning can be used to show that $P^*(X_{\gamma}=x_{\gamma} | X_A=x_A, X_{\Omega}=x_{\Omega}) = Q(X_{\gamma}=x_{\gamma} | X_A=x_A, X_{\Omega}=x_{\Omega})$, which concludes the proof.
\end{proof}

\begin{definition}
The log-linear model $LL_{\delta, \gamma}(X_A=x_A)$ is defined as the set of probability distributions which satisfy the condition $X_{\delta} \perp X_{\gamma} \mid X_{A} =  x_{A}$.
\end{definition}
It is easy to see that for any probability distribution $Q$, the sets $LL_{\delta, \gamma}(X_A=x_A)$ and $\mathcal{F}_{\delta, \gamma}(X_A=x_A, Q)$ can have at most one common distribution, denoted by $R$. It is also evident, using the same reasoning as in Appendix A, that $R$ is the result of the ML projection of any distribution in $\mathcal{F}_{\delta, \gamma}(X_A=x_A, Q)$ to $LL_{\delta, \gamma}(X_A=x_A)$. We are now ready to prove the main theorem.

\begin{theorem}
Cyclically projecting the observed distribution, $P_0$, in accordance with the procedures defined in \eqref{eq:cliSep} and \eqref{eq:theta} until convergence is achieved, will result in the maximum likelihood estimate, $\hat{P}$, which simultaneously satisfies all the restrictions imposed by a given SG, $G_L=(G, L)$.
\end{theorem}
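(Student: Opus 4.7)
The plan is to view the cyclical algorithm as alternating ML projections onto a finite collection of exponential subfamilies, and then to exploit the Pythagorean identity for such projections in order to derive both monotone convergence and identification of the limit with $\hat{P}$. By Theorem \ref{th:restrictions}, the chordal-graph constraint and each individual CSI constraint $X_\delta \perp X_\gamma \mid X_{L_{\{\delta,\gamma\}}} = x_{L_{\{\delta,\gamma\}}}$ translate into linear restrictions on the log-linear parameter $\phi$, so the associated constraint sets $M_0 = \Theta_G$ and $M_1, \ldots, M_r$ (one for each stratum outcome) are all exponential subfamilies of the distributions on $\mathcal{X}_\Delta$. Formulas \eqref{eq:cliSep} and \eqref{eq:theta}, together with the observation (Appendix A and the preceding lemma) that the ML projection of $P_k$ onto any single CSI set $LL_i$ equals the unique element of $LL_i \cap \mathcal{F}_i(P_k)$, show that the algorithm produces $P_{k+1} = \arg\min_{Q \in M_{i(k)}} D_{KL}(P_k, Q)$ with $i(k)$ cycling through $\{0, 1, \ldots, r\}$.

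For an ML projection onto an exponential family the Pythagorean identity
\[
D_{KL}(P_k, P^{\dagger}) = D_{KL}(P_k, P_{k+1}) + D_{KL}(P_{k+1}, P^{\dagger})
\]
holds for every $P^{\dagger} \in M_{i(k)}$. Choosing $P^{\dagger}$ in the intersection $M = \bigcap_{i=0}^{r} M_i$, which is non-empty since for instance the uniform distribution satisfies every CSI restriction, the identity applies at every step and telescopes to
\[
D_{KL}(P_0, P^{\dagger}) = \sum_{k=0}^{K-1} D_{KL}(P_k, P_{k+1}) + D_{KL}(P_K, P^{\dagger}),
\]
so $D_{KL}(P_k, P^{\dagger})$ is non-increasing in $k$ and $\sum_k D_{KL}(P_k, P_{k+1}) < \infty$. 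Pinsker's inequality then gives $\|P_{k+1} - P_k\|_{\mathrm{TV}} \to 0$, and since the iterates lie in a compact subset of the simplex they possess an accumulation point $P^{\infty}$. Extracting subsequences aligned with each phase of the cycle and invoking continuity of the individual projection maps shows that $P^{\infty}$ is a fixed point of every single projection and hence lies in $M$. Passing to the limit in the telescoping identity then yields, for every $P^{\dagger} \in M$,
\[
D_{KL}(P_0, P^{\dagger}) = D_{KL}(P_0, P^{\infty}) + D_{KL}(P^{\infty}, P^{\dagger}) \ge D_{KL}(P_0, P^{\infty}),
\]
so $P^{\infty}$ minimises $D_{KL}(P_0, \cdot)$ over $M$ and therefore equals the maximum likelihood estimate $\hat{P}$; uniqueness of $\hat{P}$ upgrades the subsequential convergence to convergence of the whole sequence.

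The main obstacle is the step that treats each projection as a projection onto an exponential family: $LL_{\delta,\gamma}(X_A = x_A)$ is \emph{not} a linear set in the probability parameterisation $\theta$, so the standard theorem of \citet{Csiszar75} on alternating I-projections onto linear sets does not apply verbatim in $\theta$-coordinates. The auxiliary linear family $\mathcal{F}_{\delta,\gamma}(X_A = x_A, Q)$, whose linearity is the content of the preceding lemma, together with the observation that the ML projection of $P_k$ onto $LL_i$ coincides with the unique element of $LL_i \cap \mathcal{F}_i(P_k)$, allows each step to be re-expressed as an I-projection onto a linear set and, equivalently, permits the Pythagorean identity to be verified in the $\phi$-coordinates where linearity is restored by Theorem \ref{th:restrictions}. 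This reformulation is the decisive technical manoeuvre.
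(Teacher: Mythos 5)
Your proposal is correct and follows essentially the same route as the paper: each step is recognised as an ML projection that coincides with the unique element of $LL_i \cap \mathcal{F}_i(P_k)$ (with the linearity of $\mathcal{F}$ carrying the Pythagorean identity), the identity is telescoped to obtain summability of the step-wise divergences, and a compactness argument identifies the limit with $\hat{P}$. The only cosmetic differences are that the paper telescopes against $\hat{P}$ itself and delegates the final accumulation-point step to \citet[Theorem 3.2]{Csiszar75}, whereas you telescope against an arbitrary element of the intersection and spell that argument out.
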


\begin{proof}
This proof uses the results found in \cite{Rudas98}, with Theorem 2 of that paper being of paramount importance. An essential part of the proof is the so called Pythagorean identity for discrimination information, see for instance \cite{Rudas98}, which states that if $S$ belongs to a linear set and $R$ is the MDI projection of a distribution $T$ onto this set, then $D_{KL}(S, T) = D_{KL}(S, R) + D_{KL}(R, T)$.

Let $m$ denote the number of context-specific independencies in $L$, i.e. the total number of instances in all strata included in $L$. Further, let $P_l$ be the distribution attained when projecting the distribution $P_{l-1}$ according to the $l$th context-specific independence, say $X_{\delta} \perp X_{\gamma} \mid X_{L_{\{\delta,\gamma\}}} =  x_{L_{\{\delta,\gamma\}}}$, in $L$. It then holds that
\[
P_l = LL_{\delta, \gamma}(X_{L_{\{\delta,\gamma\}}} =  x_{L_{\{\delta,\gamma\}}}) \cap \mathcal{F}_{\delta, \gamma}(X_{L_{\{\delta,\gamma\}}} =  x_{L_{\{\delta,\gamma\}}}, P_{l-1}).
\]
$P_l$ is also the MDI projection of any distribution in $LL_{\delta, \gamma}(X_{L_{\{\delta,\gamma\}}} =  x_{L_{\{\delta,\gamma\}}})$ to $\mathcal{F}_{\delta, \gamma}(X_{L_{\{\delta,\gamma\}}} =  x_{L_{\{\delta,\gamma\}}}, P_{l-1})$. \cite{Rudas98} makes this statement without providing any further comment, but as it is not self-evident we have chosen to include a proof. In order to do this we turn to \citet[Theorem 1]{Csiszar03}. This theorem states that for a log-convex set $\mathcal{T}$, which $LL_{\delta, \gamma}(X_{L_{\{\delta,\gamma\}}} =  x_{L_{\{\delta,\gamma\}}})$ constitutes as it defines an exponential family, the ML projection, denoted by $R$, of an arbitrary distribution $S$ to $\mathcal{T}$ is the unique distribution that satisfies
\[
D_{KL}(S, T) \geq \min_{A \in \mathcal{T}} D_{KL}(S, A) + D_{KL}(R, T), \quad T \in \mathcal{T}.
\] 
In our case, as $\hat{P} \in LL_{\delta, \gamma}(X_{L_{\{\delta,\gamma\}}} =  x_{L_{\{\delta,\gamma\}}})$ and $P_l$ is the ML projection of any distribution $S$ in $\mathcal{F}_{\delta, \gamma}(X_{L_{\{\delta,\gamma\}}} =  x_{L_{\{\delta,\gamma\}}}, P_{l-1})$ to $LL_{\delta, \gamma}(X_{L_{\{\delta,\gamma\}}} =  x_{L_{\{\delta,\gamma\}}})$ it holds that
\[
D_{KL}(S, \hat{P}) \geq D_{KL}(S, P_l) + D_{KL}(P_l, \hat{P}) , \ S \in \mathcal{F}_{\delta, \gamma}(X_{L_{\{\delta,\gamma\}}} =  x_{L_{\{\delta,\gamma\}}}, P_{l-1}).
\] 
Which implies that $D_{KL}(S, \hat{P}) \geq D_{KL}(P_l, \hat{P})$ holds for every $S$ in $\mathcal{F}_{\delta, \gamma}(X_{L_{\{\delta,\gamma\}}} =  x_{L_{\{\delta,\gamma\}}}, P_{l-1})$ and $P_l$ is the MDI projection of any distribution in $LL_{\delta, \gamma}(X_{L_{\{\delta,\gamma\}}} =  x_{L_{\{\delta,\gamma\}}})$ to $\mathcal{F}_{\delta, \gamma}(X_{L_{\{\delta,\gamma\}}} =  x_{L_{\{\delta,\gamma\}}}, P_{l-1})$. Therefore the Pythagorean identity is applicable and we can conclude that
\begin{equation}
D_{KL}(P_{l-1}, \hat{P}) = D_{KL}(P_{l-1}, P_l) + D_{KL}(P_{l}, \hat{P}).
\label{eq:PINyman}
\end{equation}
\cite{Rudas98} showed that the Pythagorean identity is also applicable when projecting a distribution onto the set of distributions satisfying the restrictions imposed by a chordal graph. I.e. if we by $P_{m+1}$ denote the distribution that results from projecting $P_m$ to $\Theta_{G}$ according to \eqref{eq:cliSep} we get that
\begin{equation}
D_{KL}(P_{m}, \hat{P}) = D_{KL}(P_{m}, P_{m+1}) + D_{KL}(P_{m+1}, \hat{P}).
\label{eq:PIRudas}
\end{equation}
Combining \eqref{eq:PINyman} and \eqref{eq:PIRudas} and letting the projection $n+i$ be the same projection as $i$ if $n=k(m+1)$ for some value $k=0,1, \ldots$ results in
\[
D_{KL}(P_0, \hat{P}) = \sum_{l=1}^{n} D_{KL}(P_{l-1}, P_{l}) + D_{KL}(P_{n}, \hat{P}).
\]
for every $n$. The existence of $\hat{P}$ implies that for any $n$
\[
\sum_{l=1}^{n} D_{KL}(P_{l-1}, P_{l}) < \infty,
\] 
which, in turn, implies that $D_{KL}(P_{l-1}, P_l) \rightarrow 0$ as $l \rightarrow \infty$. Just as \cite{Rudas98} we can now refer to the compactness argument found in \citet[Theorem 3.2]{Csiszar75} to complete the proof.
\end{proof}

In practice we need a criterion to determine whether or not the cyclical projections have converged to $\hat{P}$. The criterion that we use terminates the projections once an entire cycle consisting of $m+1$ projections has been completed with the total sum of changes made to $\theta$ being less than a predetermined constant $\epsilon$. Using $\theta_i = (\theta_{i1}, \ldots, \theta_{ik})$ to denote the parameter after the $i$:th projection in the cycle, with $\theta_{0}$ denoting the starting value. The cyclical projections are terminated when
\[
\sum_{i=1}^{m+1} \sum_{j=1}^{k} | \theta_{ij} - \theta_{(i-1)j} | < \epsilon.
\]


\section{Bayesian Learning of SGMs}
\label{secAlgorithm}
Bayesian learning of graphical models has attained considerable interest, both in the statistical and computer science literature, see e.g. \cite{Madigan94}, \cite{Dellaportas99}, \cite{Giudici99}, \cite{Corander03b}, \cite{Giudici03}, \cite{Koivisto04}, and \cite{Corander08}. Our learning algorithms described below belong to the class of non-reversible Metropolis-Hastings algorithms, introduced by \cite{Corander06} and later further generalized and applied to learning of graphical models in \cite{Corander08}. A similar algorithm was also used in \citet{Nyman14a} for decomposable SGMs.

To allow for Bayesian learning of SGMs, we use the maximum likelihood estimation technique introduced in the previous section to derive an approximation of the marginal likelihood based on the general result for exponential families due to \cite{Schwarz78}. The approximation utilizes the Bayesian information criterion (BIC), and is written
\begin{equation}
\log P(\mathbf{X} \mid G_{L})\approx l(\mathbf{X} \mid \hat{\theta},G_{L}) - \frac{\text{dim}(\Theta \mid G_{L})}{2}\log n,
\label{MLbic}
\end{equation}
where $\hat{\theta}$ is the maximum likelihood estimate of the model parameters under the restrictions imposed by $G_{L}$, $l(\mathbf{X} \mid \hat{\theta}, G_{L})$ is the logarithm of the likelihood function corresponding to $\hat{\theta}$, and dim$(\Theta \mid G_{L})$ is the maximum number of free parameters in a distribution with the parameter restrictions induced by $G_L$. We denote the right hand side of \eqref{MLbic} by $\log S(G_L \mid \mathbf{X})$, i.e. $P(\mathbf{X} \mid G_L) \approx S(G_L \mid \mathbf{X})$.

The maximum number of free parameters in a distribution with the parameter restrictions induced by an SG can readily be calculated using the log-linear parameterization discussed in Section \ref{sec:logLin}. 

Let $\mathcal{M}$ denote the finite space of states over which the aim is to approximate the posterior distribution. In this paper we will run two separate types of searches. In one search the state space $\mathcal{M}$ will consist of all possible sets of strata for a given chordal graph. In the second search the state space will be the set of chordal graphs combined with the optimal set of strata for that graph. For $M \in \mathcal{M}$, let $Q(\cdot \mid M)$ denote the proposal function used to generate a new candidate state given the current state $M$. Under the generic conditions stated in \citet{Corander08}, the probability with which any particular candidate is picked by $Q(\cdot \mid M)$ need not be explicitly calculated or known, as long as it remains unchanged over all the iterations and the resulting chain satisfies the condition that all states can be reached from any other state in a finite number of steps. To initialize the algorithm, a starting state $M_{0}$ is determined. At iteration $t=1,2,...$ of the non-reversible algorithm, $Q(\cdot \mid M_{t-1})$ is used to generate a candidate state $M^{\ast}$, which is accepted with the probability
\begin{equation}
\min\left(  1,\frac{P(\mathbf{X} \mid M^{\ast})P(M^{\ast})}{P(\mathbf{X} \mid M_{t-1})P(M_{t-1})}\right),
\label{accept}
\end{equation}
where $P(M)$ is the prior probability assigned to $M$. The term $P(\mathbf{X} \mid M)$ denotes the marginal likelihood of the dataset $\mathbf{X}$ given $M$. If $M^{\ast}$ is accepted, we set $M_{t}=M^{\ast}$, otherwise we set $M_{t}=M_{t-1}$.

In contrast to the standard reversible Metropolis-Hastings algorithm, for this non-reversible algorithm the posterior probability $P(M \mid \mathbf{X})$ does not, in general, equal the stationary distribution of the Markov chain. Instead, a consistent approximation of $P(M \mid \mathbf{X})$ is obtained by considering the space of distinct states $\mathcal{M}_{t}$ visited by time $t$ such that
\[
\hat{P}_t(M \mid \mathbf{X}) = \frac{P(\mathbf{X} \mid M)P(M)}{\sum_{M' \in \mathcal{M}_{t}} P(\mathbf{X} \mid M')P(M')}.
\]
\citet{Corander08} proved, under rather weak conditions, that this estimator is consistent, i.e.
\[
\hat{P}_t(M \mid \mathbf{X})\overset{a.s.}{\rightarrow}P(M \mid \mathbf{X}),
\]
as $t\rightarrow\infty$. As our main interest will lie in finding the posterior optimal state, i.e.
\[
\arg\mathop{\max}_{M\in\mathcal{M}}P(M \mid \mathbf{X}).
\]
it will suffice to identify
\[
\arg\mathop{\max}_{M\in\mathcal{M}}P(\mathbf{X} \mid M)P(M).
\]

As the marginal likelihood of a dataset is not available for the models considered in this paper the approximated BIC score is used instead. The main goal of our search algorithm is to identify the stratified graph $G_L^{\text{opt}}$ optimizing $S(G_L \mid \mathbf{X} ) P(G_L)$. Under the assumption that the optimal set of strata is known for each underlying graph a Markov chain traversing the set of possible underlying graphs will eventually identify $G_L^{\text{opt}}$. Another search may be used in order to identify the optimal set of strata given the underlying graph. The proposal functions used are described in Appendix B. For the experiments conducted in the next section, in order to penalize dense graphs, the following non-uniform prior \citep{Nyman14a} is used
\[
P(G_{L}) \propto 2^{- |\Theta_G|}.
\]
Here, $|\Theta_G|$ denotes the maximum number of free parameter in a distribution satisfying the restrictions imposed by the underlying graph $G$.

\section{Illustration of SGM Learning from Data}
\label{secRes}
In this section, in order to save space, when displaying an SG we instead of writing a stratum as $(X_1=1, X_2=0)$ only write $(1, 0)$. This is possible since, given the graph, it is clear which variables define the context-specific independence when the variables are ordered by their integer labels.

The first dataset that we have investigated includes prognostic factors for coronary heart disease and can be found in \citet{Edwards85}. The data consists of 1841 observations of the six variables listed in Table \ref{tab:heart}.
\begin{table}[htb]
\begin{center}
\begin{tabular}
[c]{cll} \hline
Variable & Meaning & Range \\ \hline
$X_1$ & Smoking & No = 0, Yes = 1 \\
$X_2$ & Strenuous mental work & No = 0, Yes = 1 \\
$X_3$ & Strenuous physical work & No = 0, Yes = 1 \\
$X_4$ & Systolic blood pressure $> 140$ & No = 0, Yes = 1 \\
$X_5$ & Ratio of beta and alpha lipoproteins $> 3$ & No = 0, Yes = 1 \\
$X_6$ & Family anamnesis of coronary heart disease & No = 0, Yes = 1 \\ \hline
\end{tabular}
\end{center}
\caption{Variables in coronary heart disease data.}
\label{tab:heart}
\end{table}
In Figure \ref{fig:heart} two different SGs are displayed. The SG in Figure \ref{fig:heart}a is obtained by first conducting a search for the optimal ordinary chordal graph and then identifying the optimal set of strata for that graph. The underlying graph has the score $-6732.84$, while the SG has the score $-6721.67$. Figure \ref{fig:heart}b contains the estimated globally optimal SG, which has the score $-6713.24$. The underlying graph for this SG has the score $-6764.14$.

\begin{figure}[htb]
\begin{center}
\includegraphics[width=\textwidth]{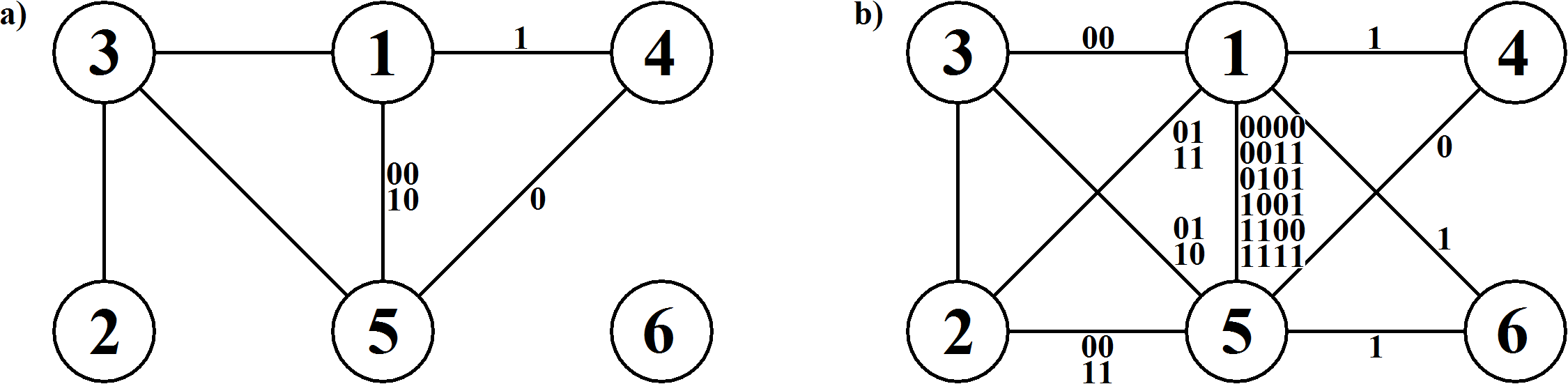}
\end{center}
\caption{Optimal SGs for heart data. In a) the optimal ordinary graph is amended with optimal strata. In b) the globally optimal SG.}
\label{fig:heart}
\end{figure}

The second dataset that we consider is derived from the answers given by 1806 candidates in the Finnish parliament elections of 2011, in a questionnaire issued by the newspaper Helsingin Sanomat \citep{HelsinginSanomat11}. The eight questions considered, represented by eight variables, are given in Appendix C. As in the previous section we present in Figure \ref{fig:HS} two different SGs, the SG resulting from first determining the optimal ordinary graph and finding the optimal set of strata for that graph and the globally optimal SG. For the globally optimal SG we, instead of displaying the exact strata, give the total number of instances included in the stratum associated with each edge. The score for the underlying graph of Figure \ref{fig:HS}a is $-7177.69$ and for the SG $-7162.78$. The corresponding scores for the graph in Figure \ref{fig:HS}b are $-7245.11$ and $-7139.13$.

\begin{figure}[htb]
\begin{center}
\includegraphics[width=\textwidth]{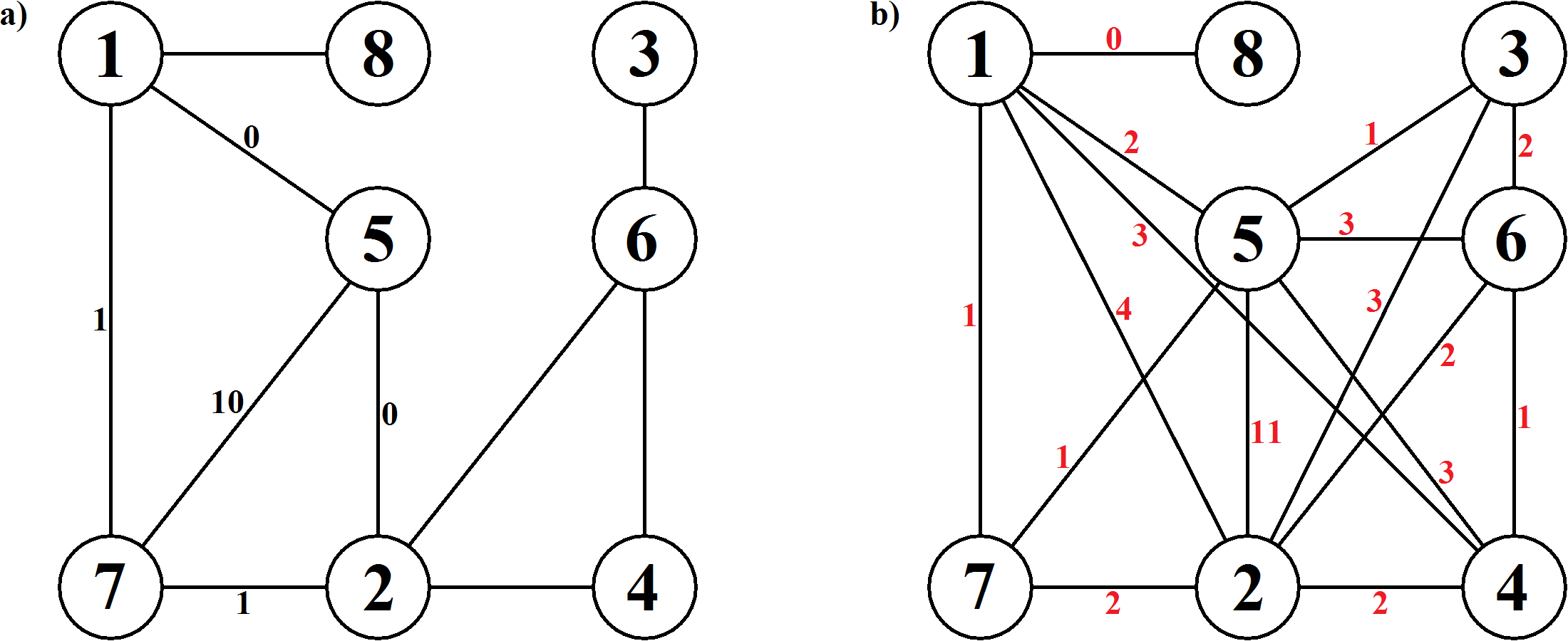}
\end{center}
\caption{Optimal SGs for parliament election data. In a) the optimal ordinary graph is amended with optimal strata. In b) the globally optimal SG with number of instances in each strata listed beside the corresponding edge.}
\label{fig:HS}
\end{figure}

These examples demonstrate that when using Markov networks, variables that would be considered conditionally dependent may in fact be independent in certain contexts. The examples also show that the globally optimal SG contains more edges than the optimal ordinary graph. This can be accredited to the fact that when using dense graphs the set of available parameter restrictions grows, while adding strata to a dense graph can still result in models that induce distributions with few free parameters. A possible method to avoid optimal SGs being very dense, and thus hampering interpretability, is to apply a stronger prior over the model space, further penalizing dense graphs or graphs with many strata as done in \citet{Pensar14}. In conclusion, these experimental results show that context-specific independencies occur naturally in various datasets and therefore it can be very useful to use graphical models that are able to capture such dependence structures.

\section{Discussion}

Graphical models, and log-linear models more generally, are useful for many types of multivariate analysis due to their interpretability. The context-specific graphical log-linear models discussed here extend the expressiveness of the stratified models considered earlier in \citet{Nyman14a} by removing the restriction concerning overlap of strata. By applying the general estimation theory developed in \cite{Rudas98} and \cite{Csiszar75}, we were able to derive a consistent procedure for estimating the parameters of a context-specific graphical log-linear model based on cyclical projections each corresponding to a specific independence restriction. Two examples with real data illustrated how the relaxation of the model class properties enables additional discovery of context-specific independencies. In future research, it would be interesting to attempt to identify  further classes of non-hierarchical restrictions to log-linear parameters, such that interpretability is maintained in the same fashion as for the current context-specific models.

\section*{Acknowledgements}
H.N. and J.P. were supported by the Foundation of \AA bo Akademi University, as part of the grant for the Center of Excellence in Optimization and Systems Engineering. J.C. was supported by the ERC grant no. 239784 and academy of Finland grant no. 251170. T.K. was supported by a grant from the Swedish research council VR/NT.

\begin{appendix}

\section*{Appendix A}
Derivation of the parameters in equation \eqref{eq:theta}.
\newline\newline
We will here give a more detailed explanation of how $\hat{\theta}_{0,0} = \hat{P}(X_{L_{\{\delta,\gamma\}}} = x_{L_{\{\delta,\gamma\}}}, X_{\Omega} = x_{\Omega}, X_{\delta} = 0, X_{\gamma} = 0)$ is derived. It is generally possible to use the factorization
\begin{gather*}
P(X_{L_{\{\delta,\gamma\}}} = x_{L_{\{\delta,\gamma\}}}, X_{\Omega} = x_{\Omega}, X_{\delta} = 0, X_{\gamma} = 0) = \\
P(X_{L_{\{\delta,\gamma\}}} = x_{L_{\{\delta,\gamma\}}}, X_{\Omega} = x_{\Omega}) P(X_{\delta} = 0, X_{\gamma} = 0 \mid X_{L_{\{\delta, \gamma\}}} = x_{L_{\{\delta,\gamma\}}}, X_{\Omega} = x_{\Omega}).
\end{gather*}
When considering a probability distribution where $\delta$ and $\gamma$ can be dependent, it is generally not true that $P(X_{\delta}, X_{\gamma}) = P(X_{\delta}) P(X_{\gamma})$. A standard result, see e.g. \cite{Whittaker90}, states that for a distribution where two variables are dependent the ML projection to the set of distributions where the variables are independent is obtained by calculating the product of the marginal probabilities of the two variables. This implies, in our case, creating a new distribution $\hat{P}$ according to
\begin{gather*}
\hat{P}(X_{L_{\{\delta,\gamma\}}} = x_{L_{\{\delta,\gamma\}}}, X_{\Omega} = x_{\Omega}, X_{\delta} = 0, X_{\gamma} = 0) = \\
P(X_{L_{\{\delta,\gamma\}}} = x_{L_{\{\delta,\gamma\}}}, X_{\Omega} = x_{\Omega})
P(X_{\delta} = 0 \mid X_{L_{\{\delta,\gamma\}}} = x_{L_{\{\delta, \gamma\}}}, X_{\Omega} = x_{\Omega}) \\
P(X_{\gamma} = 0 \mid X_{L_{\{\delta,\gamma\}}} = x_{L_{\{\delta, \gamma\}}}, X_{\Omega} = x_{\Omega}).
\end{gather*}
Using the earlier introduced notations this corresponds to setting
\begin{gather*}
\hat{\theta}_{0,0} = \hat{P}(X_{L_{\{\delta,\gamma\}}} = x_{L_{\{\delta,\gamma\}}}, X_{\Omega} = x_{\Omega} , X_{\delta} = 0, X_{\gamma} = 0) = \\
P(X_{L_{\{\delta,\gamma\}}} = x_{L_{\{\delta,\gamma\}}}, X_{\Omega} = x_{\Omega})
P(X_{\delta} = 0 \mid X_{L_{\{\delta,\gamma\}}} = x_{L_{\{\delta, \gamma\}}}, X_{\Omega} = x_{\Omega}) \\
P(X_{\gamma} = 0 \mid X_{L_{\{\delta,\gamma\}}} = x_{L_{\{\delta,\gamma\}}}, X_{\Omega} = x_{\Omega}) = \\
(\theta_{0,0}+\theta_{0,1}+\theta_{1,0}+\theta_{1,1}) \cdot(\theta_{0,0}+\theta_{0,1}) /
(\theta_{0,0}+\theta_{0,1}+\theta_{1,0}+\theta_{1,1}) \cdot\\
(\theta_{0,0}+\theta_{1,0}) / (\theta_{0,0}+\theta_{0,1}+\theta_{1,0}+\theta_{1,1}) =\\
(\theta_{0,0}+\theta_{0,1}) \cdot(\theta_{0,0}+\theta_{1,0}) / (\theta_{0,0}+\theta
_{0,1}+\theta_{1,0}+\theta_{1,1}).
\end{gather*}
The other parameters $\hat{\theta}_{0,1}$, $\hat{\theta}_{1,0}$, and $\hat{\theta}_{1,1}$ can be derived in a similar fashion.

\section*{Appendix B}
Proposal functions used for model optimization.
\newline \newline
Using the proposal function defined in Algorithm \ref{AlgoStrata}, running a sufficient amount of iterations, we can be assured to find the optimal set of strata for any chordal graph.
\begin{algorithm}
\label{AlgoStrata} Proposal function for finding optimal strata for a chordal graph.
\end{algorithm}
Let $G$ denote the underlying graph. By $L_A$ we denote all possible instances that can be added to any stratum of $G$. If $L_A$ is empty no strata may be added to $G$ and the algorithm is terminated. $L$ denotes the current state with $L$ being empty in the starting state.
\begin{enumerate}
\item Set the candidate state $L^* = L$.
\item Perform one of the following steps.
\begin{itemize}
\item[2.1.] If $L$ is empty add a randomly chosen instance from $L_A$ to $L^*$.
\item[2.2.] Else if  $\{L_A \setminus L\}$ is empty remove a randomly chosen instance from $L^*$.
\item[2.3.] Else with probability $0.5$ add a randomly chosen instance from  $\{L_A \setminus L\}$ to $L^*$.
\item[2.4.] Else remove a randomly chosen instance from $L^*$.
\end{itemize}
\end{enumerate}

\noindent Using this proposal function the optimal set of strata can be found for any underlying graph and we can proceed to the search for the best underlying graph. The proposal function in Algorithm \ref{AlgoSGM} is used for this task.
\begin{algorithm}
\label{AlgoSGM} Proposal function used to find the optimal underlying chordal graph.
\end{algorithm}
\noindent The starting state is set to be the graph containing no edges. Let $G$ denote the current graph with $G_L = (G, L)$ being the stratified graph with underlying graph $G$ and optimal set of strata $L$.
\begin{enumerate}
\item Set the candidate state $G^* = G$. 
\item Randomly choose a pair of nodes $\delta$ and $\gamma$. If the edge $\{\delta, \gamma\}$ is present in $G^*$ remove it, otherwise add the edge $\{\delta, \gamma\}$ to $G^*$.
\item While $G^*$ is non-chordal repeat steps 1 and 2.
\end{enumerate}
\noindent The resulting candidate state $G^*$ is used along with the corresponding optimal set of strata $L^*$ to form the stratified graph $G^*_L = (G^*, L^*)$ which is used when calculating the acceptance probability according to \eqref{accept}.

\section*{Appendix C}
Questions considered in parliament election data.
\begin{enumerate}
\item Since the mid-1990's the income differences have grown rapidly in Finland. How should we react to this? \\
0 - The income differences do not need to be narrowed. \\
1 - The income differences need to be narrowed.

\item Should homosexual couples have the same rights to adopt children as heterosexual couples? \\
0 - Yes. \\
1 - No.

\item Child benefits are paid for each child under the age of 18 living in Finland, independent of the parents' income. What should be done about child benefits? \\
0 - The income of the parents should not affect the child benefits. \\
1 - Child benefits should be dependent on parents' income.

\item In Finland military service is mandatory for all men. What is your opinion on this? \\
0 - The current practice should be kept or expanded to also include women. \\
1 - The military service should be more selective or abandoned altogether.

\item Should Finland in its affairs with China and Russia more actively debate issues regarding human rights and the state of democracy in these countries? \\
0 - Yes. \\
1 - No.

\item Russia has prohibited foreigners from owning land close to the borders. In recent years, Russians have bought thousands of properties in Finland. How should Finland react to this? \\
0 - Finland should not restrict foreigners from buying property in Finland. \\
1 - Finland should restrict foreigners' rights to buy property and land in Finland.

\item During recent years municipalities have outsourced many services to privately owned companies. What is your opinion on this? \\
0 - Outsourcing should be used to an even higher extent. \\
1 - Outsourcing should be limited to the current extent or decreased.

\item Currently, a system is in place where tax income from more wealthy municipalities is transferred to less wealthy municipalities. In practice this means that municipalities in the Helsinki region transfer money to the other parts of the country. What is your opinion of this system? \\
0 - The current system is good, or even more money should be transferred. \\
1 - The Helsinki region should be allowed to keep more of its tax income.
\end{enumerate}

\end{appendix}

\bibliographystyle{henrik}
\bibliography{biblio}

\end{document}